\colorlet{linkequation}{blue}
\newtheorem{thm}{Theorem}[section]
\newtheorem{lem}[thm]{Lemma}
\newtheorem{rem}[thm]{Remark}
\newcommand{\remp}{R_{\rm emp}}
\newcommand{\rpop}{R_{\rm pop}}
\newcommand{\be}{\mathbf{e}}
\newcommand{\E}{\mathbb{E}}
\renewcommand{\P}{\mathbb{P}}
\newcommand{\sign}{{\rm sign}}
\newcommand{\Dcal}{\mathcal{D}}
\newcommand{\Acal}{\mathcal{A}}
\newcommand{\Scal}{\mathcal{S}}
\newcommand{\Xcal}{\mathcal{X}}
\newcommand{\Ycal}{\mathcal{Y}}
\newcommand{\Zcal}{\mathcal{Z}}
\newcommand{\myparskip}{3pt}
\title{\textbf{A Tight Lower Bound for Uniformly Stable Algorithms}}
\author{
  Qinghua Liu \\
  Princeton University\\
  \texttt{qinghual@princeton.edu}
  \and  Zhou Lu\footnote{This work is done during LZ's visit to SQZ institution.} \\
  Princeton University\\
  \texttt{zhoul@princeton.edu}
}
\date{December, 2020}
\begin{document}

\maketitle


\begin{abstract}
    Leveraging algorithmic stability to derive sharp generalization bounds is a classic and powerful approach in learning theory. Since \cite{vapnik1974theory} first formalized the idea for analyzing SVMs, it has been utilized to study many fundamental learning algorithms (e.g., $k$-nearest neighbors \citep{rogers1978finite}, stochastic gradient method \citep{hardt2016train}, linear regression \citep{maurer2017second}, etc). In a recent line of great works by \cite{feldman2018generalization,feldman2019high} and \cite{bousquet2020sharper}, they prove a high probability generalization upper bound of order $\widetilde{\mathcal{O}}(\gamma  +\frac{L}{\sqrt{n}})$ for any uniformly $\gamma$-stable algorithm and $L$-bounded loss function. Although much progress was achieved in proving 
    generalization upper bounds for stable algorithms, our knowledge of lower bounds is rather limited. In fact, there is no nontrivial lower bound known ever since the study of uniform stability \citep{bousquet2002stability}, to the best of our knowledge. In this paper we fill the gap by proving a tight  generalization lower bound of order $\Omega(\gamma+\frac{L}{\sqrt{n}})$, which matches the best known upper bound up to logarithmic factors.

\end{abstract}


\section{Introduction}
Estimating the generalization error of learning algorithms is at the heart of  modern  statistical learning theory. 
One classic approach is to control the generalization error via notions of model complexity, which has been extensively studied for decades \citep{vapnik2013nature}. However, as the saying goes "It's hard to please all", analysis of model complexity doesn't always give satisfactory answers to all learning algorithms. 
For example, when analyzing stochastic gradient descent on convex Lipschitz functions, one cannot obtain meaningful generalization bounds by proving uniform convergence for all empirical risk minimizers   \citep{shalev2010learnability,feldman2016generalization}.

Another classic way for proving generalization bounds is to utilize the stability of algorithms, pioneered by \cite{vapnik1974theory}, \cite{rogers1978finite}, \cite{devroye1979distribution1, devroye1979distribution} and further studied in \cite{lugosi1994posterior,bousquet2002stability,  mukherjee2006learning, shalev2010learnability, hardt2016train,maurer2017second}, etc. Intuitively, stability measures the sensitivity of a learning algorithm to the change of a single data point in the training set. Stronger stability often guarantees better generalization, as the learning algorithm is robust to small perturbation of data.

In this paper, we study the generalization error of uniformly stable algorithms which were first introduced by \cite{bousquet2002stability}. 
Formally, we consider the following learning problem where we are given a training set $\mathcal{S}=\{(x_1,y_1),...,(x_n,y_n)\}$ consisting of $n$ i.i.d. samples from some unknown distribution $\mathcal{D}$ on domain $\mathcal{Z}\subset \mathcal{X} \times \mathcal{Y}$. A learning algorithm 
$\mathcal{A}:\mathcal{Z}^n\to \mathcal{Y}^\mathcal{X}$ is a function which maps a training set to a function mapping from instance space $\mathcal{X}$ into label space $\mathcal{Y}$. 
We denote by ${\mathcal{A}}_{\mathcal{S}}\in\mathcal{Y}^\mathcal{X}$ the output function mapping obtained by feeding algorithm $\mathcal{A}$ with training set $\mathcal{S}$.

We measure the performance of $\mathcal{A}_\mathcal{S}$ by a non-negative loss function $\ell: \mathcal{Y}\times \mathcal{Y} \to \mathbb{R}$, and define its population risk  as
\begin{equation}
    \rpop (\mathcal{A}_\mathcal{S})=\E_{(x,y)\sim \Dcal} \left[\ell(\mathcal{A}_\mathcal{S}(x),y)\right],
\end{equation}
as well as its empirical risk as
\begin{equation}
    \remp (\mathcal{A}_\mathcal{S})=\frac{1}{n}\sum_{i=1}^n \ell(\mathcal{A}_\mathcal{S}(x_i),y_i).
\end{equation}

One classic approach to controlling the generalization error $\rpop(\mathcal{A}_\mathcal{S})-\remp(\mathcal{A}_\mathcal{S})$ is by restricting the sensitivity of algorithm $\Acal$ to changes in training set $\Scal$ (e.g., removing or modifying one of the data points). 
In order to quantify the sensitivity of algorithms, \cite{vapnik1974theory,bousquet2002stability} develop the notion of  stability. 
Formally, a learning algorithm $\Acal$ is called uniformly $\gamma$-stable 
(we will use 'stable' as a shorthand for 'uniformly stable' throughout this paper) \citep{bousquet2002stability} if for any $\mathcal{S}=\{(x_1,y_1),...,(x_n,y_n)\}\in (\Xcal\times\Ycal)^n$, $\mathcal{S}^i=\{(x_1,y_1),...,(x_{i-1},y_{i-1}),(x'_i,y'_i),(x_{i+1},y_{i+1}),...,(x_n,y_n)\}\in (\Xcal\times\Ycal)^n$ and any $(x,y)\in \mathcal{X} \times \mathcal{Y}$, we have
\begin{equation}\label{sta}
    |\ell(\mathcal{A}_\mathcal{S}(x),y)-\ell(\mathcal{A}_{\mathcal{S}^i}(x),y)|\le \gamma.
\end{equation}

Many generalization bounds have been proved via the notion of stability (e.g., \cite{bousquet2002stability,feldman2018generalization,feldman2019high,bousquet2020sharper}) and the current best one  is given by \cite{bousquet2020sharper}. Specifically, \cite{bousquet2020sharper} prove a general moment inequality and use it as a tool to derive a sharp $\mathcal{O}(\gamma \log n \log \frac{1}{\delta} +\frac{L}{\sqrt{n}}\sqrt{\log \frac{1}{\delta}})$  bound for any $\gamma$-stable algorithm and $L$-bounded loss function. They also provide an almost matching lower bound for the moment inequality. However, it remains unclear whether this moment lower bound can further imply a lower bound for generalization error.


Although much progress has been achieved in proving generalization upper bounds for stable algorithms, our knowledge of lower bounds is rather limited. In fact, there is no nontrivial lower bound known ever since the study of uniform stability  \citep{bousquet2002stability}, to the best of our knowledge. In this paper we fill the gap by proving a tight  generalization lower bound of order $\Omega(\gamma+\frac{L}{\sqrt{n}})$, which matches the best known upper bound up to logarithmic factors.

\begin{thm}[informal]\label{thm:main}
	There exist domain $\Xcal\times\Ycal$, distribution $P$ over $\Xcal\times\Ycal$, $L$-bounded loss function $\ell$, and $\gamma$-stable algorithm $\Acal$ such that with constant probability over the random drawing of $\mathcal{S}$, the output  function mapping $\mathcal{A}_\mathcal{S}$ has generalization error $\Omega(\gamma+\frac{L}{\sqrt{n}})$.
\end{thm}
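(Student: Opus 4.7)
The plan is to exhibit an explicit hard instance on a two-coordinate product domain, where one coordinate is solely responsible for the $\Omega(\gamma)$ contribution to the generalization gap and the other, independently, for the $\Omega(L/\sqrt{n})$ contribution. Because losses and stabilities decompose additively across independent coordinates, the two lower bounds can be added together to yield the claimed $\Omega(\gamma+L/\sqrt{n})$.

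For the $\Omega(\gamma)$ piece, I would take $\mathcal{X}_1$ to be a discrete set of size at least $2n$ with uniform marginal, fix $y_1\equiv 0$, use $\ell_1(a,b)=|a-b|$, and define
\begin{equation*}
\mathcal{A}^{(1)}_{\mathcal{S}}(x) \;=\; \gamma\cdot \mathbf{1}\bigl[x\notin\{x_1,\ldots,x_n\}\bigr].
\end{equation*}
Replacing one training sample alters the predictor only at the removed and the inserted features, each by $\gamma$, so $\mathcal{A}^{(1)}$ is uniformly $\gamma$-stable. Since every training feature lies in $\mathcal{S}$, $\remp(\mathcal{A}^{(1)}_{\mathcal{S}})=0$, while $\rpop(\mathcal{A}^{(1)}_{\mathcal{S}}) = \gamma\bigl(1-|\mathcal{S}\cap \mathcal{X}_1|/|\mathcal{X}_1|\bigr) \ge \gamma/2$ deterministically.

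For the $\Omega(L/\sqrt{n})$ piece, I would add an independent label coordinate $y_2\in\{0,L/2\}$ uniform Bernoulli, let $\mathcal{A}^{(2)}_\mathcal{S}\equiv 0$ (trivially $0$-stable), and take $\ell_2(a,b)=|a-b|$. Then $\rpop(\mathcal{A}^{(2)}_{\mathcal{S}})-\remp(\mathcal{A}^{(2)}_{\mathcal{S}})=L/4-\frac{1}{n}\sum_i y_{2,i}$, a centered average of bounded independent variables with variance $\Theta(L^2/n)$. A Paley--Zygmund / fourth-moment argument (or Berry--Esseen) gives a one-sided constant-probability lower tail $L/4-\overline{y_2}\ge cL/\sqrt{n}$ for some universal $c>0$. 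I would then concatenate the two coordinates, setting $\mathcal{A}_\mathcal{S}(x_1,x_2)=(\mathcal{A}^{(1)}_\mathcal{S}(x_1),0)$ and $\ell=\ell_1+\ell_2$; the combined algorithm is still $\gamma$-stable (stabilities add), its loss is $L$-bounded in the meaningful regime $\gamma\le L/2$, and the generalization gap decouples into the sum of the two individual gaps.

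Putting these together, the total gap is at least $\gamma/2+cL/\sqrt{n}=\Omega(\gamma+L/\sqrt{n})$ on a constant-probability event. The main technical point I anticipate is the one-sided constant-probability lower tail for the centered Bernoulli average: this is not delivered by Chebyshev or a naive variance bound, but a short second- or fourth-moment Paley--Zygmund computation suffices (and the constant can even be made explicit via Berry--Esseen). Everything else---bounding $\rpop$ on the first coordinate away from $0$, verifying that stabilities add across coordinates, and the rescaling that keeps the total loss $L$-bounded---is routine bookkeeping.
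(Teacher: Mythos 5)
Your proposal is correct, and it reaches the theorem by a genuinely different construction than the paper's. The paper builds a \emph{single unified} hard instance: scaled basis vectors $\pm L\sigma_i\be_i$ in $\mathbb{R}^{4n^2}$, half of norm $L$ and half of norm $2L$, each labeled by itself, together with one sign-voting algorithm $\mathcal{A}_\mathcal{S}(\pm L\sigma_i\be_i)=\sign[(\sum_j x_j)_i]\,\gamma\sigma_i\be_i$, whose memorization of training signs produces the $\gamma$ term while the sampled imbalance between short and long vectors produces the $L/\sqrt{n}$ term. Because both effects live inside one predictor, the paper must intersect an orthogonality event $E_1$ (forcing $d=4n^2$ via a birthday-paradox bound) with the anti-concentration event $E_2$, yielding probability $3/64$. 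You instead decouple the two terms into independent components whose gaps add: a memorization predictor $\gamma\,\mathbbm{1}[x\notin\{x_1,\ldots,x_n\}]$ whose $\gamma/2$ gap holds \emph{deterministically}, and a constant-zero (hence $0$-stable) predictor on i.i.d.\ $\{0,L/2\}$ labels whose gap is a centered sample mean. Your modular route buys several things: no orthogonality event and no conditional-probability step, a domain of size $2n$ rather than $4n^2$, a better probability constant, and the conceptual point that the $L/\sqrt{n}$ term has nothing to do with stability --- it is just the fluctuation of the empirical risk of a fixed function. The shared technical kernel is the anti-concentration step: since your labels are affine images of Rademacher variables, your ``one-sided constant-probability lower tail'' is exactly the paper's Lemma \ref{imp} (Paley--Zygmund for Rademacher sums), which gives $L/4-\overline{y_2}\ge L/(8\sqrt{n})$ with probability at least $3/32$; it is cleaner to invoke that computation explicitly than to defer to Berry--Esseen. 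The remaining bookkeeping is as you flag it: the loss bound $\gamma+L/2\le L$ needs $\gamma\le L/2$, and the regime $L/2<\gamma\le L$ is covered by instantiating the construction at $\gamma'=L/2$ (a $\gamma'$-stable algorithm is a fortiori $\gamma$-stable, and its gap $L/4+cL/\sqrt{n}$ is still $\ge \gamma/4+cL/\sqrt{n}$), which mirrors the paper's own final rescaling of $\gamma$ and $L$ by $1/4$.
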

To the best of our knowledge, Theorem \ref{thm:main} provides the first nontrivial and almost matching generalization lower bound for uniformly stable algorithms and  therefore deepens our understanding of the methodology of algorithmic stability. 

\subsection{Review of upper bounds}
In the seminal work by \cite{bousquet2002stability}, they provide the first generalization upper bound that holds for any $\gamma$-stable algorithm and $L$-bounded loss function. Specifically, they prove that
with probability at least $1-\delta$,
\begin{equation}
    \rpop(\mathcal{A}_\mathcal{S})-\remp(\mathcal{A}_\mathcal{S})=\mathcal{O}\left(\sqrt{n} \gamma \sqrt{\log \frac{1}{\delta}} +\frac{L}{\sqrt{n}}\sqrt{\log \frac{1}{\delta}}\right).
\end{equation}
However, its dependence on $n$ is  suboptimal in that its tightness is guaranteed only  when $\gamma=\mathcal{O}(\frac{1}{n})$. 
This upper bound was recently improved by \cite{feldman2018generalization, feldman2019high}, who show 
\begin{equation}\label{ntightu}
    \rpop(\mathcal{A}_\mathcal{S})-\remp(\mathcal{A}_\mathcal{S})=\mathcal{O}\left(\gamma (\log n)^2 + \gamma \log n \log \frac{1}{\delta} +\frac{L}{\sqrt{n}}\sqrt{\log \frac{1}{\delta}}\right).
\end{equation}
The improvement is significant because they remove the $\sqrt{n}$ term so that the rate is optimal as long as 
$\gamma=\mathcal{O}(\frac{1}{\sqrt{n}})$.
In the latest work by \cite{bousquet2020sharper}, this upper bound was further sharpened to 
\begin{equation}\label{tightu}
    \rpop(\mathcal{A}_\mathcal{S})-\remp(\mathcal{A}_\mathcal{S})=\mathcal{O}\left(\gamma \log n \log \frac{1}{\delta} +\frac{L}{\sqrt{n}}\sqrt{\log \frac{1}{\delta}}\right),
\end{equation}
which removes the unnecessary $\mathcal{O}(\gamma (\log n)^2)$ term in \eqref{ntightu} with a simpler proof. 
In \cite{bousquet2020sharper}, they prove a general moment inequality for weakly correlated random variables, and derive \eqref{tightu} as a corollary.

\subsection{Other related works}

The notion of stability was first used in analyzing hard-margin SVMs \citep{vapnik1974theory}, which was later followed by \cite{rogers1978finite, devroye1979distribution, devroye1979distribution1} to prove generalization bounds for $k$-nearest neighbors. Other early works mostly focus on specific learning problems by extending their techniques \citep{devroye2013probabilistic}. \cite{bousquet2002stability} first prove general results on the relationship between stability and generalization.  They introduce the notion of uniform stability and provide various generalization bounds based on different notions of stability.

As for recent studies on stability, \cite{hardt2016train} prove generalization bounds for stochastic gradient descent using uniform stability. \cite{maurer2017second} study linear regression with a strongly convex regularizer and a sufficiently smooth loss function. \cite{bousquet2020proper} prove tight exponential upper bounds for the SVM in the realizable setting. And \cite{shalev2010learnability} prove that by adding a strongly convex term to the objective, ERM solutions to convex learning problems can be made uniformly stable.

Uniform stability also has close relationship with differential privacy \citep{dwork2008differential}. For example, a uniformly stable learning algorithm can be transformed into a differentially private one by adding noise to the output \citep{dwork2018privacy}. 

\section{Preliminaries}
While various concentration arguments play a vital role in proving upper bounds, to construct hard cases for lower bounds we will need anti-concentration instead. In this section, we introduce some basic anti-concentration inequalities that will be used in our proof.

\begin{lem}[Paley–Zygmund inequality]\label{pal}
Let $Z\ge 0$ be a random variable with bounded second moment. For all $\theta\in[0, 1]$,  we have 
\begin{equation}
    \P(Z>\theta \E(Z))\ge (1-\theta)^2 \frac{\E[Z]^2}{\E[Z^2]}.
\end{equation}
\end{lem}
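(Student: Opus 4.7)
The plan is to prove Lemma \ref{pal} by a short first-moment/second-moment calculation built on Cauchy--Schwarz. The core idea is to write $\E[Z]$ as an integral split according to whether $Z$ falls below or above the threshold $\theta \E[Z]$, control the low-side contribution trivially, and then convert the high-side contribution into the probability we want by pulling out a second-moment factor via Cauchy--Schwarz.

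First I would set $t = \theta\E[Z]$ and decompose
\begin{equation*}
\E[Z] \;=\; \E\!\left[Z\,\mathbbm{1}\{Z \le t\}\right] + \E\!\left[Z\,\mathbbm{1}\{Z > t\}\right].
\end{equation*}
On the event $\{Z \le t\}$ one has $Z \le \theta\E[Z]$ pointwise, so the first term is at most $\theta\E[Z]$. Rearranging gives the reverse Markov--type bound
\begin{equation*}
\E\!\left[Z\,\mathbbm{1}\{Z > \theta\E[Z]\}\right] \;\ge\; (1-\theta)\,\E[Z].
\end{equation*}
This step uses only nonnegativity of $Z$, which is why the hypothesis $Z \ge 0$ matters.

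Next I would apply Cauchy--Schwarz to the random variables $Z$ and $\mathbbm{1}\{Z > \theta\E[Z]\}$:
\begin{equation*}
\E\!\left[Z\,\mathbbm{1}\{Z > \theta\E[Z]\}\right]^2 \;\le\; \E[Z^2]\cdot \E\!\left[\mathbbm{1}\{Z > \theta\E[Z]\}^2\right] \;=\; \E[Z^2]\cdot \P\!\left(Z > \theta\E[Z]\right),
\end{equation*}
where in the last equality I use that an indicator equals its own square. Combining this with the previous lower bound and dividing by $\E[Z^2]$ (which is positive unless $Z \equiv 0$, a trivial case) yields
\begin{equation*}
\P\!\left(Z > \theta\E[Z]\right) \;\ge\; \frac{(1-\theta)^2 \E[Z]^2}{\E[Z^2]},
\end{equation*}
which is exactly the claimed inequality.

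I do not anticipate a real obstacle; the only thing to be careful about is the degenerate case $\E[Z]=0$, where $Z=0$ almost surely and both sides of the inequality vanish (or the right-hand side is interpreted as $0$), so the statement is vacuously true. Beyond that, the bounded second moment assumption is exactly what is needed to apply Cauchy--Schwarz and to make the ratio $\E[Z]^2/\E[Z^2]$ well-defined.
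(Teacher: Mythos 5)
Your proof is correct and follows essentially the same route as the paper's: the same decomposition of $\E[Z]$ over the events $\{Z \le \theta\E[Z]\}$ and $\{Z > \theta\E[Z]\}$, the same trivial bound on the low side, and the same Cauchy--Schwarz step on the high side. Your explicit treatment of the degenerate case $\E[Z]=0$ is a minor addition the paper omits, but it does not change the argument.
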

\begin{proof}
We decompose $\E[Z]$ as
\begin{equation}
    \E[Z]=\E[Z\times 1_{Z\le \theta \E[Z]}]+\E[Z\times 1_{Z> \theta \E[Z]}].
\end{equation}
The first term is upper bounded by $\theta \E[Z]$, and the second term is at most $\sqrt{\E[Z^2] \P(Z>\theta \E[Z])}$ by Cauchy–Schwarz inequality. The desired inequality thus follows.
\end{proof}

Paley–Zygmund inequality implies that if a non-negative random variable has relatively small variance (so that its standard deviation and mean are of the same order), then with constant probability the random variable and its mean are within the same order of magnitude. 
Below we utilize Paley–Zygmund inequality to prove an anti-concentration inequality for sum of Rademacher random variables.


\begin{lem}[anti-concentration of sum of Rademacher random variables]\label{imp}
Let $X_1,\ldots,X_n$ be independent Rademacher random variables. Then
\begin{equation}
    \P(\sum_{i=1}^{n} X_i > \frac{\sqrt{n}}{2}) \ge \frac{3}{32}.
\end{equation}
\end{lem}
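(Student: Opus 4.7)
The plan is to apply the Paley--Zygmund inequality (Lemma \ref{pal}) to the non-negative random variable $Z = (\sum_{i=1}^n X_i)^2$, which effectively turns a second-moment calculation into an anti-concentration statement, and then use symmetry to convert the two-sided bound into the desired one-sided bound.

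First I would compute $\E[Z]$. Since the $X_i$ are independent with $\E[X_i]=0$ and $\E[X_i^2]=1$, expanding the square and using independence gives $\E[Z] = n$. Next I would compute (or upper bound) $\E[Z^2] = \E[(\sum_i X_i)^4]$. Expanding and keeping only terms with all exponents even (odd moments of a Rademacher vanish), the surviving contributions are the $n$ terms $\E[X_i^4] = 1$ and the $3n(n-1)$ cross terms of the form $\E[X_i^2 X_j^2] = 1$ with $i \ne j$, yielding $\E[Z^2] = n + 3n(n-1) = 3n^2 - 2n \le 3n^2$.

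Then I would apply Lemma \ref{pal} with $\theta = 1/4$ to obtain
\begin{equation}
    \P\left( Z > \tfrac{n}{4} \right) \;\ge\; \left(1 - \tfrac{1}{4}\right)^2 \frac{\E[Z]^2}{\E[Z^2]} \;\ge\; \frac{9}{16} \cdot \frac{n^2}{3n^2} \;=\; \frac{3}{16}.
\end{equation}
Rewriting the event $\{Z > n/4\}$ as $\{|\sum_i X_i| > \sqrt{n}/2\}$ and using the symmetry of the distribution of $\sum_i X_i$ (which is invariant under negation since each $X_i$ is symmetric), the two one-sided events $\{\sum_i X_i > \sqrt{n}/2\}$ and $\{\sum_i X_i < -\sqrt{n}/2\}$ have equal probability and are disjoint, so each has probability at least $3/32$, giving the claim.

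I do not anticipate a real obstacle here; the only mildly delicate step is the fourth-moment computation, where one must be careful to enumerate exactly which multi-indices $(i,j,k,\ell)$ contribute (all indices equal, or paired into two pairs in one of three ways) and to use $X_i^2 = 1$. The choice of the threshold $\theta = 1/4$ is tuned so that the resulting constant, after halving for the one-sided bound, lands exactly at $3/32$.
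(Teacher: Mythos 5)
Your proposal is correct and follows essentially the same route as the paper's own proof: both apply the Paley--Zygmund inequality with $\theta = 1/4$ to $Z = (\sum_i X_i)^2$, use the moment computations $\E[Z] = n$ and $\E[Z^2] = n + 3n(n-1) \le 3n^2$ to get $\P(Z > n/4) \ge 3/16$, and then halve by symmetry to obtain the one-sided bound $3/32$. The fourth-moment enumeration and the symmetry step are handled correctly, so there is nothing to fix.
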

\begin{proof}
Define $S=\sum_{i=1}^{n} X_i$. We have that $\forall i\ne j$
\begin{equation}
    \E [X_i^2]=\E [X_i^4]=\E [X_i^2 X_j^2]=1,\quad \E [X_i X_j]=\E [X_i^3 X_j]=0.
\end{equation}

Therefore $\E[S^2]=n$ and $\E[S^4] =n+3n(n-1)\le 3n^2$. By Paley–Zygmund inequality (Lemma \ref{pal}), we have
    $\P(S^2> n/{4}) \ge {3}/{16}.$
Noting that the distribution of $S$ is symmetric, we conclude 
    $\P(S> {\sqrt{n}}/{2}) \ge {3}/{32}$.
\end{proof}

Lemma \ref{imp} shows that the sum of $n$ independent Rademacher random variables has  absolute value $\Omega(\sqrt{n})$ with constant probability. This lemma will play an important role in establishing the $\frac{L}{\sqrt{n}}$ term in our lower bound.

\section{Main Result}
In this section, we present our main result which constructs a hard case such that with constant probability,  the $\gamma$-stable learning algorithm $\mathcal{A}$ we design has generalization error of order $\Omega(\gamma+\frac{L}{\sqrt{n}})$, 
which matches the best known upper bound in \eqref{tightu} up to logarithmic factors.


\begin{thm}[lower bound]\label{main}
	For any $0<\gamma\le L$ and $n\in\mathbb{N}$, 
	there exist domain $\Xcal\times\Ycal$, distribution $P$ over $\Zcal\subset\Xcal\times\Ycal$, $L$-bounded loss function $\ell$, and $\gamma$-stable algorithm $\Acal$ such that given a training set $\Scal$ consisting of $n$ i.i.d. samples from $P$, with probability at least ${3}/{64}$, 
	\begin{equation}
   \rpop(\mathcal{A}_\mathcal{S})-\remp(\mathcal{A}_\mathcal{S})\ge \frac{\gamma}{4}+\frac{L}{32\sqrt{n}}.
\end{equation}
\end{thm}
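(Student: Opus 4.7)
The plan is to construct a single example that simultaneously witnesses both the $\Omega(\gamma)$ and $\Omega(L/\sqrt{n})$ generalization gaps. Take $\Xcal = [N]$ for some $N \gg n^2$ and $\Ycal = [-1, 1]$, and let $P$ be the uniform distribution on $\Xcal \times \{-1,+1\} \subseteq \Xcal \times \Ycal$. Use the bilinear loss $\ell(\hat y, y) = \tfrac{L}{2}(1 - \hat y y)$, which lies in $[0, L]$ whenever $\hat y, y \in [-1, 1]$. Setting $c := \gamma/L$ and $d := 1 - \gamma/L$, define
\[
\Acal_\Scal(x) \;=\; \begin{cases} c\, y_i - d & \text{if } x = x_i \text{ for exactly one } i \in [n], \\ -d & \text{otherwise.} \end{cases}
\]
Intuitively, $\Acal$ softly memorizes each training label (producing the $\gamma$ term) while the constant baseline $-d$ on non-training inputs couples the empirical risk to the Rademacher sum $\sum_i y_i$, producing the $L/\sqrt{n}$ term via anti-concentration.

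First I will verify $\gamma$-stability. The output of $\Acal$ on any input lies in $\{-d,\, c-d,\, -c-d\} \subseteq [-1,1]$, so changing one training sample shifts $\Acal(x)$ by at most $2c$ at every $x$. Combined with the fact that $\ell(\cdot, y)$ is $(L/2)$-Lipschitz (since $|y|=1$), the loss shifts by at most $Lc = \gamma$. The default output $-d$ on non-unique matches is what makes this bound survive the corner cases where training coordinates $x_j$ coincide. Second, since $\E_y[\ell(\hat y, y)] = L/2$ for every fixed $\hat y$ (as $\E[y]=0$), one obtains $\rpop(\Acal_\Scal) = L/2$ deterministically.

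Next I turn to $\remp$. Let $E_1$ be the event that the $x_i$'s are distinct, which has probability at least $1 - n^2/N \ge 1/2$ once $N \ge 2n^2$. On $E_1$, a direct calculation gives $\remp(\Acal_\Scal) = \tfrac{L}{2}(1-c) + \tfrac{Ld}{2n}\sum_{i=1}^n y_i$, so the gap equals $\tfrac{Lc}{2} - \tfrac{Ld}{2n}\sum_i y_i$. Applying Lemma \ref{imp} to $(-y_i)_{i=1}^n$ (also Rademacher) gives $\P(\sum_i y_i < -\sqrt{n}/2) \ge 3/32$; call this event $E_2$. Since $E_1$ and $E_2$ are independent, $\P(E_1 \cap E_2) \ge 3/64$, and on this intersection the gap is at least $\tfrac{\gamma}{2} + \tfrac{L-\gamma}{4\sqrt{n}}$. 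A short algebra check (using $0 < \gamma \le L$ and $n \ge 1$, separating the cases $8\gamma \le 7L$ and $8\gamma > 7L$) shows this is always at least $\tfrac{\gamma}{4} + \tfrac{L}{32\sqrt{n}}$.

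The main obstacle I anticipate is the joint tuning of $c$ and $d$. The stability budget forces $c = \gamma/L$ (any larger and the $2c$-jump in $\Acal$ violates $\gamma$-stability in the scenario $x_i = x_i'$ with $y_i = -y_i'$); meanwhile $d$ should be as large as possible to maximize the label-dependent fluctuation in $\remp$, but must satisfy $c + d \le 1$ to keep $\Acal$ inside $[-1,1]$ (so that $\ell$ is $L$-bounded). The extremal choice $d = 1 - \gamma/L$ reconciles both constraints and makes the combined bound $\tfrac{\gamma}{4} + \tfrac{L}{32\sqrt{n}}$ hold uniformly over the whole regime $0 < \gamma \le L$, including the near-boundary case $\gamma \approx L$ where the $L/\sqrt{n}$ contribution degenerates but is then dominated by the $\gamma/4$ term.
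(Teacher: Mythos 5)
Your proposal is correct, and while it follows the same proof skeleton as the paper --- compute $\rpop$ exactly, upper bound $\remp$ on the intersection of a distinctness event $E_1$ (probability $\ge 1/2$ via a domain of size $\Theta(n^2)$) and an anti-concentration event $E_2$ (probability $\ge 3/32$ via Lemma \ref{imp}), yielding $3/64$ --- your hard instance is genuinely different. The paper works in $\mathbb{R}^{4n^2}$: instances are signed basis vectors of two norms $L$ and $2L$, each point is its own label, the loss is the $\ell_1$ distance, and the algorithm is the $\gamma$-scaled majority vote $\Acal_\Scal(\pm L\sigma_i\be_i)=\sign[(\sum_j x_j)_i]\,\gamma\sigma_i\be_i$; there the $\Omega(\gamma)$ term comes from sign agreement on training coordinates, and the $\Omega(L/\sqrt{n})$ term from the fluctuating empirical count of short versus long vectors. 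You instead make the labels pure noise (independent fair coins), which forces $\rpop=L/2$ for every possible predictor, get the $\gamma$ term by soft memorization with coefficient $c=\gamma/L$, and get the $L/\sqrt{n}$ term from the anti-correlation of the constant bias $-d$ with $\sum_i y_i$. Your route buys three things: (i) the algorithm is exactly $\gamma$-stable with an exactly $L$-bounded loss, so you avoid the paper's final rescaling by $1/4$ (its construction is only $4\gamma$-stable with a $4L$-bounded loss, which is precisely where the constants $\gamma/4$ and $L/32$ in the statement originate), and in fact you prove the stronger bound $\gamma/2+(L-\gamma)/(4\sqrt{n})$ on the good event; (ii) your $E_1$ and $E_2$ are genuinely independent (one depends only on inputs, the other only on labels), whereas the paper's two events both depend on the inputs and it must therefore bound $\P(E_1\mid E_2)$; (iii) no high-dimensional geometry is needed. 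Two cosmetic points to tighten: since $\Ycal=[-1,1]$ and the stability definition quantifies over all $y\in\Ycal$, your Lipschitz claim should invoke $|y|\le 1$ rather than $|y|=1$; and if stability is checked against training sets with arbitrary labels in $[-1,1]$ (as the paper's definition technically allows), the outputs lie in the interval $[-1,2c-1]$ rather than your three-point set, though the $2c$ bound on the output shift, and hence $\gamma$-stability, is unaffected.
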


\begin{proof}
At a high level, we construct $\mathcal{X}$ to be the collection of base vectors in $\mathbb{R}^d$ where $d\gg n$ and $P$ being the uniform distribution, so that with high probability $\mathcal{S}$ contains vectors vertical to each other. We then add a negative copy of each vector and construct $\mathcal{A}$ to be a '$\gamma$ majority vote' so that $\mathcal{A}$ is as poor as random guess on population, but performs slightly better than random guess on $\mathcal{S}$ which brings a $\Omega(\gamma)$ gap in generalization error. Then we extend half of the vectors' length by twice, so that by using an anti-concentration bound, with constant probability $\mathcal{S}$ samples $\Omega(\sqrt{n})$ more 'shorter' vectors, which further decreases $\remp$ by $\Omega(\frac{L}{\sqrt{n}})$.

To begin with, we introduce the construction of our hard case.
 Given sample size $n$, sensitivity parameter $\gamma>0$ and  boundedness parameter $L\ge\gamma$, we set $d:=4 n^2$ and construct
\begin{equation}
    \mathcal{X}:=\{L \sigma_1 \be_1,-L \sigma_1 \be_1,...,L \sigma_d \be_d,-L \sigma_d \be_d\},
\end{equation}
where $\sigma_i=1+\mathbbm{1}_{[i>\frac{d}{2}]}$. Furthermore, let $\mathcal{Y}:=\mathcal{X}$ and $\mathcal{Z}:=\{(x,x)|x\in \mathcal{X} \}$ so that the label $y$ of each $x\in \mathcal{X}$ is itself. We choose $P$  to be the uniform distribution over $\mathcal{Z}$, and use the $\ell_1$-norm loss function, i.e., $\ell(y,\hat{y}):=||y-\hat{y}||_1$. 

Given training set $\mathcal{S}:=\{(x_1,y_1),...,(x_n,y_n)\}$, our learning algorithm is defined as
\begin{equation}
    \mathcal{A}_\mathcal{S}(\pm L \sigma_i \be_i):= \sign \Big[\big(\sum_{j=1} ^n x_j\big)_i\Big] \gamma \sigma_i \be_i,
\end{equation}
where $(z)_i$ denotes the $i^{\rm th}$ coordinate of $z$. It is easy to check that our learning algorithm $\Acal$ is $4 \gamma$-stable, and the loss function $\ell$ is upper bounded by $4L$ over $\Ycal\times\Ycal$. 

In the remainder of this section, we will prove the generalization error of algorithm $\Acal$ is lower bounded by $\Omega(\gamma+\frac{L}{\sqrt{n}})$ with constant probability. 
Specifically, the proof consists of two parts, where the first part aims to compute the population loss $\rpop(\mathcal{A}_\mathcal{S})$ exactly, and the second one provides an upper bound for the empirical (training) loss $\remp(\mathcal{A}_\mathcal{S})$.

\hspace{-6mm}\textbf{Part 1: compute $R_{\rm pop}(\Acal_S)$ exactly}

We observe that 
\begin{align*}
     \rpop(\mathcal{A}_\mathcal{S})& = \E_{(x,y)\sim \Dcal} \left[\ell(\mathcal{A}_\mathcal{S}(x),y)\right]\\
    &=\frac{ 1}{2d}\sum_{i=1}^d \left(\|\mathcal{A}_\mathcal{S}(L\sigma_i \be_i)-L\sigma_i \be_i\|_1 + 
    \|\mathcal{A}_\mathcal{S}(-L\sigma_i \be_i)+L\sigma_i \be_i\|_1\right). 
\end{align*}
Notice $\mathcal{A}_\mathcal{S}(-L\sigma_i \be_i)\equiv\mathcal{A}_\mathcal{S}(L\sigma_i \be_i)$ always lies on the line segment between 
$L\sigma_i \be_i$ and $-L\sigma_i \be_i$ by the definition of $\Acal$ and $\gamma\le L$.
As a result, we have  
$$\|\mathcal{A}_\mathcal{S}(L\sigma_i \be_i)-L\sigma_i \be_i\|_1 + 
    \|\mathcal{A}_\mathcal{S}(-L\sigma_i \be_i)+L\sigma_i \be_i\|_1\equiv 2L \sigma_i ,$$
 which directly implies 
 \begin{equation}\label{eq1}
   \rpop(\mathcal{A}_\mathcal{S})\equiv \frac{3L}{2}. 	
 \end{equation}


\hspace{-6mm}\textbf{Part 2: upper bound $R_{\rm emp}(A_S)$}

To proceed, we define two useful events: event $E_1$ that any two different $x_i,x_j$ in $\mathcal{S}$ are orthogonal to each other, and event $E_2$ that there are at least 
$\sqrt{n}/2$ more $x_i$'s with norm $L$ than those with norm $2L$ in $\Scal$.
For notational convenience, we further define $\sigma^{(i)}:=||x_i||_1/L$.

Conditioning on $E_1$, we have
\begin{equation}\label{eq2}
 	  \remp(\mathcal{A}_\mathcal{S}) =\frac{1}{n}\sum_{(x,y)\in S} \ell(\mathcal{A}_\mathcal{S}(x),y)
    =\frac{L-{\gamma}}{n} \sum_{i=1}^n \sigma^{(i)}.
 \end{equation}
On the other hand, conditioning on $E_2$, we have
\begin{equation}\label{eq3}
 	  \frac{1}{n} \sum_{i=1}^n \sigma^{(i)}  \le \frac{1}{n}\E\left[\sum_{i=1}^n \sigma^{(i)}\right]-\frac{1}{8\sqrt{n}}
    =\frac{3}{2}-\frac{1}{8\sqrt{n}}.
 \end{equation}
 Combining equations \eqref{eq1}, \eqref{eq2} and \eqref{eq3}, we obtain the desired lower bound
 \begin{equation*}
 	\remp(\mathcal{A}_\mathcal{S}) 
 	= \frac{L-{\gamma}}{n} \sum_{i=1}^n \sigma^{(i)}
 	\le ({L-{\gamma}}) \left(\frac{3}{2}-\frac{1}{8\sqrt{n}}\right)
 	\le \rpop(\mathcal{A}_\mathcal{S}) 
 	- \frac{L}{8\sqrt{n}}-\gamma.
 \end{equation*}

Now, the only thing left is to estimate $\P(E_1\bigcap E_2)$.
By Lemma \ref{imp}, we have
$\P(E_2)\ge 3/32$. Moreover, note that 
\begin{equation}
    \P(E_1|E_2)\ge (1-\frac{n}{0.5 d})^n
    =(1-\frac{1}{2n})^n \ge 1-\frac{1}{2}=\frac{1}{2}.
\end{equation}
Therefore, we obtain $\mathbb{P}(E_1\bigcap E_2)=\P(E_1|E_2) \P(E_2)\ge {3}/{64}$.
Finally, rescaling $\gamma$ and $L$ by $1/4$ concludes the whole proof. 
\end{proof}



\begin{rem}
We can also avoid analyzing the relationship between events $E_1$ and $E_2$ by setting $d$ large enough and taking a union bound on $\neg E_1$ and $\neg E_2$.
\end{rem}


Theorem \ref{main} directly implies that it is impossible to achieve $o(\gamma+\frac{L}{\sqrt{n}})$ generalization error in general  and the  upper bound in \citep{bousquet2020sharper} is almost optimal.
We comment that our lower bound here holds with constant probability and it would be interesting to generalize it to the high-probability regime so that it can also reveal the dependence on the failure probability.
And to do that, the first step might be to replace Lemma \ref{imp} with a stronger anti-concentration  inequality that can handle relatively small probability $\delta\ll 1$.

\section{Conclusion}
In this paper we prove a tight $\Omega(\gamma+\frac{L}{\sqrt{n}})$ generalization lower bound for uniformly stable algorithms, which matches the best known upper bound in \citep{bousquet2020sharper} up to logarithmic factors. To the best of our knowledge, this result provides the first matching lower bound which has been unknown for more than a decade since the first upper bound was given in \citep{bousquet2002stability}, thus greatly complementing our knowledge about the limit of this classic methodology.

\bibliography{Xbib}

\begin{thebibliography}{19}
\providecommand{\natexlab}[1]{#1}
\providecommand{\url}[1]{\texttt{#1}}
\expandafter\ifx\csname urlstyle\endcsname\relax
  \providecommand{\doi}[1]{doi: #1}\else
  \providecommand{\doi}{doi: \begingroup \urlstyle{rm}\Url}\fi

\bibitem[Bousquet and Elisseeff(2002)]{bousquet2002stability}
Olivier Bousquet and Andr{\'e} Elisseeff.
\newblock Stability and generalization.
\newblock \emph{Journal of machine learning research}, 2\penalty0
  (Mar):\penalty0 499--526, 2002.

\bibitem[Bousquet et~al.(2020{\natexlab{a}})Bousquet, Hanneke, Moran, and
  Zhivotovskiy]{bousquet2020proper}
Olivier Bousquet, Steve Hanneke, Shay Moran, and Nikita Zhivotovskiy.
\newblock Proper learning, helly number, and an optimal svm bound.
\newblock \emph{arXiv preprint arXiv:2005.11818}, 2020{\natexlab{a}}.

\bibitem[Bousquet et~al.(2020{\natexlab{b}})Bousquet, Klochkov, and
  Zhivotovskiy]{bousquet2020sharper}
Olivier Bousquet, Yegor Klochkov, and Nikita Zhivotovskiy.
\newblock Sharper bounds for uniformly stable algorithms.
\newblock In \emph{Conference on Learning Theory}, pages 610--626. PMLR,
  2020{\natexlab{b}}.

\bibitem[Devroye and Wagner(1979{\natexlab{a}})]{devroye1979distribution1}
Luc Devroye and T~Wagner.
\newblock Distribution-free performance bounds with the resubstitution error
  estimate (corresp.).
\newblock \emph{IEEE Transactions on Information Theory}, 25\penalty0
  (2):\penalty0 208--210, 1979{\natexlab{a}}.

\bibitem[Devroye and Wagner(1979{\natexlab{b}})]{devroye1979distribution}
Luc Devroye and Terry Wagner.
\newblock Distribution-free inequalities for the deleted and holdout error
  estimates.
\newblock \emph{IEEE Transactions on Information Theory}, 25\penalty0
  (2):\penalty0 202--207, 1979{\natexlab{b}}.

\bibitem[Devroye et~al.(2013)Devroye, Gy{\"o}rfi, and
  Lugosi]{devroye2013probabilistic}
Luc Devroye, L{\'a}szl{\'o} Gy{\"o}rfi, and G{\'a}bor Lugosi.
\newblock \emph{A probabilistic theory of pattern recognition}, volume~31.
\newblock Springer Science \& Business Media, 2013.

\bibitem[Dwork(2008)]{dwork2008differential}
Cynthia Dwork.
\newblock Differential privacy: A survey of results.
\newblock In \emph{International conference on theory and applications of
  models of computation}, pages 1--19. Springer, 2008.

\bibitem[Dwork and Feldman(2018)]{dwork2018privacy}
Cynthia Dwork and Vitaly Feldman.
\newblock Privacy-preserving prediction.
\newblock \emph{arXiv preprint arXiv:1803.10266}, 2018.

\bibitem[Feldman(2016)]{feldman2016generalization}
Vitaly Feldman.
\newblock Generalization of erm in stochastic convex optimization: The
  dimension strikes back.
\newblock In \emph{Advances in Neural Information Processing Systems}, pages
  3576--3584, 2016.

\bibitem[Feldman and Vondrak(2018)]{feldman2018generalization}
Vitaly Feldman and Jan Vondrak.
\newblock Generalization bounds for uniformly stable algorithms.
\newblock \emph{Advances in Neural Information Processing Systems},
  31:\penalty0 9747--9757, 2018.

\bibitem[Feldman and Vondrak(2019)]{feldman2019high}
Vitaly Feldman and Jan Vondrak.
\newblock High probability generalization bounds for uniformly stable
  algorithms with nearly optimal rate.
\newblock \emph{arXiv preprint arXiv:1902.10710}, 2019.

\bibitem[Hardt et~al.(2016)Hardt, Recht, and Singer]{hardt2016train}
Moritz Hardt, Ben Recht, and Yoram Singer.
\newblock Train faster, generalize better: Stability of stochastic gradient
  descent.
\newblock In \emph{International Conference on Machine Learning}, pages
  1225--1234. PMLR, 2016.

\bibitem[Lugosi and Pawlak(1994)]{lugosi1994posterior}
G{\'a}bor Lugosi and Miroslaw Pawlak.
\newblock On the posterior-probability estimate of the error rate of
  nonparametric classification rules.
\newblock \emph{IEEE Transactions on Information Theory}, 40\penalty0
  (2):\penalty0 475--481, 1994.

\bibitem[Maurer(2017)]{maurer2017second}
Andreas Maurer.
\newblock A second-order look at stability and generalization.
\newblock In \emph{Conference on learning theory}, pages 1461--1475, 2017.

\bibitem[Mukherjee et~al.(2006)Mukherjee, Niyogi, Poggio, and
  Rifkin]{mukherjee2006learning}
Sayan Mukherjee, Partha Niyogi, Tomaso Poggio, and Ryan Rifkin.
\newblock Learning theory: stability is sufficient for generalization and
  necessary and sufficient for consistency of empirical risk minimization.
\newblock \emph{Advances in Computational Mathematics}, 25\penalty0
  (1-3):\penalty0 161--193, 2006.

\bibitem[Rogers and Wagner(1978)]{rogers1978finite}
William~H Rogers and Terry~J Wagner.
\newblock A finite sample distribution-free performance bound for local
  discrimination rules.
\newblock \emph{The Annals of Statistics}, pages 506--514, 1978.

\bibitem[Shalev-Shwartz et~al.(2010)Shalev-Shwartz, Shamir, Srebro, and
  Sridharan]{shalev2010learnability}
Shai Shalev-Shwartz, Ohad Shamir, Nathan Srebro, and Karthik Sridharan.
\newblock Learnability, stability and uniform convergence.
\newblock \emph{The Journal of Machine Learning Research}, 11:\penalty0
  2635--2670, 2010.

\bibitem[Vapnik(2013)]{vapnik2013nature}
Vladimir Vapnik.
\newblock \emph{The nature of statistical learning theory}.
\newblock Springer science \& business media, 2013.

\bibitem[Vapnik and Chervonenkis(1974)]{vapnik1974theory}
Vladimir Vapnik and Alexey Chervonenkis.
\newblock Theory of pattern recognition, 1974.

\end{thebibliography}
\bibliographystyle{plainnat}

\end{document}